\newtheorem{theorem}{Theorem}
\newtheorem{lemma}{Lemma}
\newtheorem{definition}{Definition}
\newcommand{\softmax}{\operatorname{softmax}}
\definecolor{darkgreen}{rgb}{0.0, 0.5, 0.0}
\definecolor{lightblue}{RGB}{173,216,230}
\definecolor{lightred}{RGB}{255,182,193}
\definecolor{lightgreen}{RGB}{173,255,47}
\definecolor{lightyellow}{RGB}{255,255,204}
\definecolor{violet}{RGB}{90, 19, 242}
\newcommand{\logits}{\text{LLM}}
\newcommand{\clip}{\text{clip}}
\newcommand{\norm}[1]{\left\lVert#1\right\rVert}
\newcommand{\tz}{\tilde{z}}
\newcommand{\tbz}{\mathbf{\tz}}
\newcommand{\eos}{\texttt{<eos>}}
\newcommand{\subsetsize}{s}
\definecolor{codegreen}{rgb}{0,0.6,0}
\definecolor{codegray}{rgb}{0.5,0.5,0.5}
\definecolor{codepurple}{rgb}{0.58,0,0.82}
\definecolor{backcolour}{rgb}{0.95,0.95,0.92}
\definecolor{myhighlight}{RGB}{255, 230, 150}
\lstdefinestyle{mystyle}{
  commentstyle=\color{blue},
  keywordstyle=\color{black},
  basicstyle=\ttfamily\fontsize{8}{9.5}\selectfont,  %
  breakatwhitespace=true,         
  breaklines=true,                 
  captionpos=b,                    
  keepspaces=true,                 
  numbersep=3pt,                  
  showspaces=false,                
  showstringspaces=false,
  showtabs=false,                  
  tabsize=2,
  breakindent=0pt,
  numbersep=3pt,
  numbers=left,
  numberstyle=\tiny\ttfamily\color{codegray},
  aboveskip=-5pt,
  belowskip=-5pt,
   xleftmargin=0pt,
  xrightmargin=0pt,
  }
\title{Privacy Preserving In-Context-Learning Framework for Large Language Models}
\author{
    Bishnu Bhusal\textsuperscript{\rm 1,2}\thanks{Work partly done during Bishnu Bhusal's internship at SRI.}, 
    Manoj Acharya\textsuperscript{\rm 2}, 
    Ramneet Kaur\textsuperscript{\rm 2}, 
    Colin Samplawski\textsuperscript{\rm 2},\\
    Anirban Roy\textsuperscript{\rm 2}, 
    Adam D. Cobb\textsuperscript{\rm 2}, 
    Rohit Chadha\textsuperscript{\rm 1}, 
    Susmit Jha\textsuperscript{\rm 2}
}
\begin{document}

\maketitle
\begin{abstract}
  Large language models (LLMs) have significantly transformed natural language understanding and generation, but they raise privacy concerns due to potential exposure of sensitive information. Studies have highlighted the risk of information leakage, where adversaries can extract sensitive information embedded in the prompts.  In this work, we introduce a novel private prediction framework for generating high-quality synthetic text with strong privacy guarantees. Our approach leverages the Differential Privacy (DP) framework to ensure worst-case theoretical bounds on information leakage without requiring any fine-tuning of the underlying models. The proposed method performs inference on private records and aggregates the resulting per-token output distributions. This enables the generation of longer and coherent synthetic text while maintaining privacy guarantees. Additionally, we propose a simple blending operation that combines private and public inference  to further enhance utility. Empirical evaluations demonstrate that our approach outperforms previous state-of-the-art methods on in-context-learning (ICL) tasks, making it a promising direction for privacy-preserving text generation while maintaining high utility. Our code is available at \url{https://github.com/bhusalb/privacy-preserving-icl}.
\end{abstract}

\maketitle
\section{Introduction}
\label{introduction}

Large Language Models (LLMs) have enjoyed widespread success in many applications. Although they primarily obtain foundational knowledge through pre-training, most users tailor trained LLMs through prompt engineering. Compared to the resource-intensive optimization of model parameters during training, prompt engineering is typically performed via API calls, where prompts are progressively refined to achieve optimal downstream performance.

However, this workflow has data privacy risks as sensitive records can be exposed in prompts and responses.  For example, when LLMs are deployed with user data, such as clinical reports, incorporated into prompts, there is a risk that sensitive information can be inadvertently disclosed to non-relevant users \cite{wang2023decodingtrust}. An adversary can also extract sensitive user data in prompts via  “jailbreaks”, where even entire prompts or segments of prompts can be extracted verbatim from some attacks. A mitigation approach could be to scrub Personally Identifiable Information (PII) from prompts, but even with that, there have been cases, such as in linkage attacks, where a combination of secondary information is enough to link the record back to an individual \cite{powar2023sok}, leading to potential privacy violations.

Differential Privacy (DP) \cite{dwork2014algorithmic} has been previously used for protecting individual data as it ensures that sensitive information stays confidential while allowing meaningful insights to be drawn from the aggregated dataset. The U.S. Census Bureau's LEHD OnTheMap tool~\cite{census}, Google's RAPPOR system as part of Google Chrome~\cite{rappor},
 and Facebook and Social Science One's release of election dataset~\cite{DVN/TDOAPG_2020,Evans_King_2023} are some noteworthy examples of industry-level deployment of this technology. To apply DP in LLM workflows, perhaps a simple and elegant way would be to transform the original prompt corpus into a semantically equivalent synthetic dataset. This synthetic version of the private dataset preserves the same overall patterns as the real data but contains no actual user records, making it safe to use for model training, inference, or external sharing without risking privacy breaches.

Current approaches to creating differentially private text with large language models fall into two main groups: private fine-tuning and private prediction \cite{dwork2018privacy}. Fine-tuning methods update model weights on the private data using a DP-SGD style algorithm \cite{abadi2016deep}.  Once fine-tuned, the model generates synthetic text directly. This method often produces high-quality outputs but requires extensive compute for training as well as full access to the model parameters. On the other hand, private prediction based methods only rely on test-time inference instead of fine-tuning the model \cite{van2020trade,majmudar2022differentially}. In these approaches, noise is added to the model’s output distribution so that each generated token satisfies differential privacy. As they avoid any form of model training, synthetic examples can be produced on demand. However, because a separate privacy cost is incurred for each generated token, the overall privacy budget accumulates rapidly which limits this approach to generating a smaller amount of synthetic text \cite{tang2024privacypreserving}.

In this paper, we introduce a private prediction technique that generates large-scale synthetic text 
while preserving strong DP guarantees. Similar to previous approaches for producing synthetic data privately, our method also runs inference over multiple disjoint subsets of the private data and then aggregates the per-token output distributions under a DP mechanism to produce synthetic examples. However, we introduce key improvements that measurably improves inference efficiency and data utility, while providing same privacy guarantees. Our DP mechanism is simpler and easy to implement.
Our main contributions are as follows:

\begin{itemize}
    \item \textbf{Novel Aggregation Approach:} Our method introduces a simple yet effective aggregation strategy that separates from prior threshold-based or heuristic-heavy methods. Our approach combines logit distributions obtained from disjoint private subsets and public prompts using differentially private clipping and averaging, ensuring privacy guarantees via composition. This simple yet principled design avoids the need for delicate calibration, reduces computational overhead and offers clear theoretical analysis and practical deployment.
    \item \textbf{Improved Efficiency:}  Existing techniques often rely on randomly sampling new subsets of demonstrations for each generation step, which necessitates re-initializing the KV-cache. This repeated recomputation of the prefix is computationally intensive and impractical for real-world usage. In contrast, our approach uses a fixed, disjoint subset of input data to generate synthetic examples. By leveraging composition and reusing cached prefix encoding, we incur only a linear computational cost rather than quadratic with respect to the number of synthetic generated tokens thus enabling efficient decoding.
    \item \textbf{Privacy-Preserving ICL via Synthetic Demonstrations:} Our approach first generates synthetic examples from the private dataset using a differentially private (DP) algorithm. These synthetic generations are then used as few-shot demonstrations during LLM inference within the in-context learning (ICL) framework. This two-stage design enables the use of private data for ICL without compromising privacy, and supports high-utility predictions while maintaining formal DP guarantees. Empirically, our method delivers improvements in ICL accuracy across five diverse benchmark tasks, surpassing existing baselines, while offering computational efficiency and formal privacy guarantees.
    
\end{itemize}

\begin{figure*}[!ht]
    \centering
    \includegraphics[width=1\linewidth]{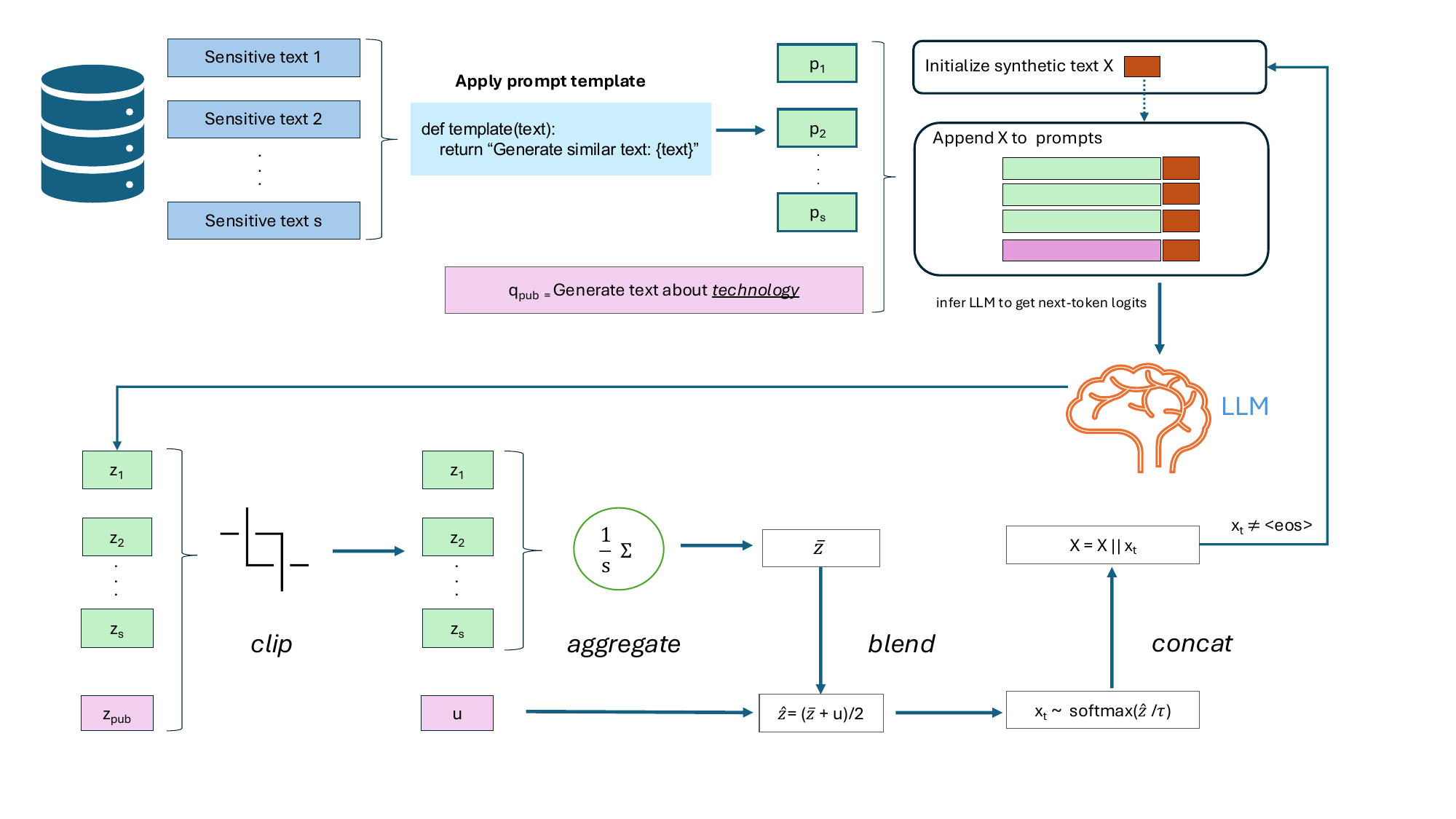}
    \caption{Overview of the proposed privacy-preserving synthetic text generation framework. A set of demonstrations is first sampled from the private dataset to construct prompts for next-token generation. These prompts are passed to an LLM to produce token-wise logits $(z_1, z_2, ... z_s)$, while a parallel public prompt yields a public logit vector $z_\text{pub}$. All logits are clipped to bound sensitivity. Then, only private logits are aggregated to compute $\bar{z} = \text{clip\_aggregate}(z_1, z_2, ... z_s)$. This aggregated private logit is then blended with clipped public logit $u$ and a token $x_t$ is sampled from the resulting temperature-scaled softmax distribution. The sampled token is appended to the synthetic sequence X, and this process is repeated until the end-of-sequence $\eos$ token is emitted.}
    \label{fig:approach}
\end{figure*}

\section{Related Work}

We focus on the  privacy-preserving in-context learning (ICL) framework, where large language models (LLMs) can perform downstream tasks effectively using only a few demonstrations, without requiring fine-tuning, as demonstrated in~\citet{BrownMRSK20}. Among the notable efforts in privacy-preserving ICL, \citet{panda2023differentially} introduce a differentially private (DP) inference mechanism by constructing a consensus over ensembles of queries with disjoint demonstrations. Although their method satisfies DP guarantees, it incurs a privacy cost for each query, thereby limiting the number of queries that can be answered under a given privacy budget. \citet{duan2023flocks} also explore private ensembling but rely on the availability of unlabeled public data, which is labeled using a teacher ensemble via ICL. This reliance on public data contrasts with our approach, which operates solely on private data, making it more suitable for sensitive domains such as healthcare or industrial applications where public datasets with similar distributions may not be accessible. Furthermore, both works primarily target text classification, while our method extends to a broader range of tasks, as demonstrated in our experiments.

Our contribution aligns with the general domain of synthetic text generation under privacy constraints, diverging from approaches that rely heavily on private fine-tuning~\citep{yu2022differentially, LiTLH21}. Inspired by the capabilities of LLMs~\citep{BrownMRSK20}, our method leverages their generation abilities in a DP-compliant manner by privately aggregating generation probabilities over disjoint subsets of private demonstrations. This technique draws conceptual similarity to the PATE framework~\citep{PapernotAEGT17, PapernotSMRTE18}, which generates private models by training on public data labeled by an ensemble of teacher models trained on disjoint private data. Extensions of PATE to text generation include SeqPATE~\citep{tian2022seqpate} and Submix~\citep{ginart2022submix}, which introduce domain-specific adaptations. 

Another direction is DP decoding, as proposed by~\citet{majmudar2022differentially}, which combines LLM predictions with uniform distributions. However, these methods generally require private training on the sensitive data, unlike our lightweight approach. Compared to private fine-tuning techniques for synthetic data generation~\citep{yue2023synthetic, mattern-etal-2022-differentially, mireshghallah2023, carranza2023privacypreserving}, our method avoids computationally intensive fine-tuning and is well-suited for scenarios where few-shot ICL suffices. While private fine-tuning may be preferable for generating large volumes of synthetic data, our approach balances efficiency and effectiveness in low-data settings.

Beyond ICL, privacy-preserving efforts in natural language generation encompass several techniques. Word-level noise injection and metric local differential privacy (LDP) have been used for sanitized text generation~\citep{feyisetan2020privacy, xu-etal-2020-differentially, carvalho2021tem, du2023sanitizing}. Other methods, such as those by~\citet{mattern-etal-2022-limits} and~\citet{utpala2023locally}, apply LDP directly to full documents through fine-tuning or zero-shot prompting followed by sanitization. Private fine-tuning remains prominent in synthetic data generation: \citet{yue2022synthetic} utilize DP-SGD~\citep{abadi2016deep} to fine-tune LLMs, while~\citet{kurakin2023harnessing} demonstrate improvements via parameter-efficient fine-tuning like LoRA~\citep{hu2021lora}. Two-stage fine-tuning approaches have also been proposed~\citep{wu2024prompt}, and similar ideas have been extended to structured data~\citep{tran2024differentially}. Another line of work focuses on private prediction~\citep{dwork2018privacy}, where privacy is guaranteed only for outputs, often via subsample-and-aggregate techniques~\citep{nissim2007smooth}, as used in PATE~\citep{papernot2018scalable}. Applied to synthetic text, these ideas involve per-token privacy accounting~\citep{tang2024privacypreserving, hong2023dp}, but suffer from limited utility due to the high privacy cost of each token. Other adaptations of private prediction to LLMs~\citep{majmudar2022differentially, ginart2022submix, flemings2024differentially} have not focused on synthetic generation. Lastly, private filtering methods operate on entire LLM responses and rely on matching public data via embedding similarity or keyword selection~\citep{yu2024privacy, xie2024differentially, wu2023privacy}, but lack adaptability to new data distributions.

Compared to previous approaches, by forgoing the Sparse Vector Technique (SVT) and its associated threshold selection procedures, our approach reduces both algorithmic complexity and runtime overhead.  This also further reduces the need to tune additional hyper-parameters  and user essentially need to set a single privacy budget, making deployment and tuning more straightforward.  Finally,  we obtain tighter worst‐case privacy guarantees which results in smaller noise scale which yields a more robust privacy mechanism with only minimal impact on downstream model utility.

\section{Background}

\subsection{In-Context Learning}
In-context learning (ICL) leverages a pre-trained model to utilize its existing knowledge by conditioning it on a sequence of demonstration examples without any further gradient updates to the model weights \cite{brown2020language,liu2021pretrainpromptpredictsystematic}. During inference, the model receives several such input–label demonstration pairs which follow a consistent format, followed by a novel test input subjected to the same pattern. The model has to then autoregressively predict the correct label for the final prompt in a few-shot manner, effectively learning the task from the provided context rather than solely relying on the information stored in model parameters \cite{dong2024surveyincontextlearning}. 
Recently, ICL has proven to be versatile across diverse NLP tasks ranging from text classification and question answering, especially as model scale increases, highlighting the emergent capabilities of LLMs \cite{wei2022emergent}.

\subsection{Differential Privacy}
Let $\mathcal{D}$ denote the set of all prompt datasets. A mechanism is a randomized algorithm that operates on data sets from $\mathcal{D}$. Two datasets $D, D' \in \mathcal{D}$ are neighboring if they differ by a single prompt (i.e., one is obtained from the other by adding or removing exactly one prompt). This follows the standard \emph{add/remove} definition of neighboring datasets in differential privacy.

 \begin{definition}[Differential Privacy (DP) \citep{DworkKMMN06}]
  A randomized algorithm $\mathcal{A}$ is  ($\epsilon$,$\delta$)-differentially private if for any two neighboring datasets $D, D' \in \mathcal{D}$ and for any set $\mathcal{S}$ of possible outputs: 
$
\textstyle{\Pr[\mathcal{A}(D) \in \mathcal{S}] \leq e^{\epsilon}\,\Pr[\mathcal{A}(D') \in \mathcal{S}] +\delta}
$.

Here, $\epsilon > 0$ controls the privacy loss, where a smaller value implies stronger privacy, and $\delta \geq 0$ represents the probability of failure, allowing for a small chance that the guarantee does not hold.
\end{definition}

\section{Problem Definition}
We address the problem of privacy-preserving in-context-learning (ICL) for large language models. Consider a private dataset $\mathcal{D}_{\text{priv}} = \{d_1, \cdots, d_n\}$ where each data point $d_i$ consists of a text-label pair, i.e., $d_i = (t_i, y_i)$. Our task is to protect the privacy of these data points from an adversary, whose goal is to either directly access or infer private information about them. To ensure this, the output of the learning process must satisfy differential privacy (DP) with respect to $\mathcal{D}_{\text{priv}}$. Specifically, for any two neighboring datasets differing in only a single entry $d_i$ the output distribution must be statistically indistinguishable.

We formally define a single instance of in-context-learning (ICL) as the following. Given a pre-trained language model that produces token-level output logits  $\logits(x_n \mid x_1, \cdots, x_{n-1})$, where each $x_i$ is a token in a vocabulary $\mathcal{V}$: $x_i \in \mathcal{V}$. The model is provided with a query input $q$ and a set of demonstration examples $D_{\text{dem}}$. The model then produces a predicted label $y$ for the query $q$ according to the function: 
\begin{align}
\label{eq:defofproblem}
y := f_{{\logits}}(D_{\text{dem}}, q).
\end{align}

\begin{algorithm}[tb]
\DontPrintSemicolon
\caption{Private Synthetic Examples Generation}
\label{alg:main}
\Parameter{ $\epsilon > 0$, $\delta \in [0,1]$, $\logits$,  private prompt set $P$ of expected size $\subsetsize$, public prompt $q_{\text{pub}}$, clipping threshold $c > 0$, temperature $\tau$, per‐iteration sensitivity bound $\Delta$, and max number of tokens to generate $T$}
\KwIn{Subset of sensitive prompts $P \in \mathcal{D}_{\text{priv}}$; each prompt contains a sensitive example.}
\KwOut{A Synthetic Example $X$}
$X \leftarrow \emptyset$\;
$\Delta \leftarrow \frac{c}{2\subsetsize}$ \;
$\tau \leftarrow \frac{2\Delta\sqrt{2T \ln(1/\delta)}}{\varepsilon}$\;
\For{$t = 1, \ldots, T$}{
    $Z \leftarrow \{\logits(p \| X) \mid p \in P\}$\;
    $\bar{z} \leftarrow \frac{1}{\subsetsize}\sum_{z \in Z} \clip_c(z)$\;
    $u \leftarrow \clip_c(\logits(q_{\text{pub}} \| X))$\;
    $\hat{z} \leftarrow (\bar{z} + u)/2$\;
    $x_t \sim \softmax(\hat{z}/\tau)$\;
    \If{$x_t = \eos$}{
        \textbf{break}\;
    }
    $X \leftarrow X \| x_t$\;
}
\Return{$X$}\;
\end{algorithm}

\section{Approach}

We present a private prediction protocol for next-token prediction in the Figure \ref{fig:approach}. Our approach follows a two-step framework for in-context-learning (ICL) with differential privacy:
\begin{enumerate}
    \item Generate synthetic examples from the private dataset $\mathcal{D}_{\text{priv}}$ using a DP algorithm.
    \item Use these synthetic generations as ICL demonstrations during LLM inference.
\end{enumerate}

This approach allows offline pre-processing and, due to the post-processing property of DP, incurs no additional privacy cost during inference.

Below, we begin by outlining standard LLM inference in a typical decoder based model, then introduce our differentially private prediction method for generating synthetic examples. Finally, we present the formal privacy guarantees provided by our algorithm.

\subsection{LLM Inference} 
Any decoder-only LLM such as GPT~\cite{gpt} and Llama~\citep{liu2022llama} takes an input prompt and generates a sequence of token indices in auto-regressive manner. The model maps each generated token $x_t$ to a logit vector $z\in\mathbb{R}^{V}$, where $V$ is the token vocabulary size. This process involves initializing the prompt sequence $X$ with the instruction phrase $p$, and repeating the following steps: (a) compute logits for the next token $x_t$ as 
$z_t=\logits(x_t)$, (b) sample the next token $x_t\sim\softmax(z_t/\tau)$ for the temperature hyper-parameter $\tau>0$, and (c) append $x_t$ to $X$. The process stops when $x_t$ is the end-of-sequence token $\texttt{<eos>}$, indicating the end of the response. Here, $\softmax(z_t/\tau)$ is the distribution that assigns probability proportional to $\exp(z_t/\tau)$ to the $t_{th}$ token, and $\tau > 0$ is a hyper-parameter that flattens or sharpens the distribution.

\subsection{Our Algorithm} 
\label{sec:our_algo}
We solve the proposed problem by generating synthetic examples $X$ while satisfying $(\epsilon, \delta)$-DP on the private dataset $\mathcal{D}_{\text{priv}}$  without fine-tuning the underlying LLM. A simple approach to generating synthetic versions of sensitive text is to use an LLM-based prompting pipeline. This involves defining a prompt function that generates synthetic samples given a label $y$, instructions for a task and a list of private data samples $p_i$ contained in the subset of sensitive prompts $P$. For instance, a prompt like \textit{``Generate similar text to: $<$sensitive text$>$''} might be used. However, such naive prompting can pose serious privacy risks, as the generated output may not only preserve the semantics of the input but also inadvertently reproduce sensitive fragments of the original text.

Algorithm \ref{alg:main} describes our method for privately generating a dataset of synthetic examples X from a dataset of sensitive prompts $\mathcal{D}_{\text{priv}}$. Our innovation lies in the fact that don't use a single prompt but instead a subset of prompts $P$ of size $\subsetsize$ and run LLM parallel inference  on each prompt. Each such inference generates a token $x_t$ with corresponding logits $\hat z$. An average of all  logit vectors  across the batch define the distribution from which the next token is selected. Before averaging, all logit vectors $z_i\in\mathbb{R}^{V}$ are clipped and re-centered using the function:

\begin{equation}
    \clip_c(z_i) = \max\{-c, z_i - \max_j {\{z_j\}} + c\} \label{eq:clip}
\end{equation}

\begin{table*}[t!]
    \centering
    \begin{tabular}{llllrrrrr}
    \toprule
\multicolumn{1}{c}{$\varepsilon$} & Method & Shots & Model & \multicolumn{1}{c}{AGNews} & \multicolumn{1}{c}{DBPedia} & \multicolumn{1}{c}{TREC} & \multicolumn{1}{c}{MIT-G} & \multicolumn{1}{c}{MIT-D} \\ \midrule
0 & Zero shot & 0 & - & $24.8_{0.0}$ & $12.0_{0.0}$ & $28.4_{0.0}$ & $29.6_{0.0}$ & $28.8_{0.0}$ \\ \midrule
\multirow{4}{*}{$\infty$} & Real data & 4 & - & $75.3_{3.0}$ & $73.6_{0.3}$ & $34.9_{5.0}$ & $56.0_{2.0}$ & $83.1_{5.3}$ \\
 & \citet{tang2024privacypreserving} & 4 & GPT-3 babbage & $69.3_{4.8}$ & $82.3_{3.7}$ & $50.6_{6.9}$ & $54.4_{7.0}$ & \multicolumn{1}{c}{-} \\
 & \citet{amin2024private} & 4 & Gemma 1.1 2B IT & $\mathbf{76.8_{4.8}}$ & $72.3_{2.5}$ & $38.8_{6.0}$ & $47.7_{2.5}$ & $81.7_{2.4}$ \\
 & \textbf{Ours} & 4 & Gemma 1.1 2B IT & $73.5_{6.0}$ & $\mathbf{81.8_{4.4}}$ & $\mathbf{62.0_{6.3}}$ & $\mathbf{58.2_{2.3}}$ & $\mathbf{87.1_{2.7}}$ \\ \midrule
\multirow{4}{*}{$1$} & \citet{tang2024privacypreserving} & 4 & GPT-3 babbage & $64.1_{3.9}$ & $81.2_{3.0}$ & $50.7_{4.1}$ & $46.3_{7.8}$ & $69.2_{7.9}$ \\
 & \citet{tang2024privacypreserving} & 4 & Gemma 1.1 2B IT & $74.9_{3.8}$ & $\mathbf{80.9_{3.6}}$ & $36.7_{2.2}$ & $34.1_{9.3}$ & $78.7_{1.9}$ \\
 & \citet{amin2024private} & 4 & Gemma 1.1 2B IT & $75.9_{3.5}$ & $75.1_{0.5}$ & $39.2_{3.7}$ & $47.1_{6.0}$ & $\mathbf{84.5_{1.0}}$ \\
 & \textbf{Ours} & 4 & Gemma 1.1 2B IT & $\mathbf{79.5_{2.6}}$ & $76.8_{2.9}$ & $\mathbf{63.0_{2.1}}$ & $\mathbf{47.2_{0.5}}$ & $79.9_{2.5}$ \\
    \bottomrule
    \end{tabular}
    \caption{\small In-context-learning accuracy comparison where we report mean and standard deviation over three random samplings (equally many from each label for classification; fully random for extraction) of synthetic/real data. \textbf{(*) Note}: For the results using GPT-3 babbage only the top-100 logprobs for contextual calibration (only top 5 are available now) are used. While not directly comparable to Gemma model which uses logprobs over the full vocabulary, we report their results for context similar to \cite{amin2024private}. Best results for $\epsilon=\infty, \text{and } 1$ on Gemma 1.1 2B IT are in bold.}
    \label{table:main}
\end{table*}

where $c>0$ is the clipping threshold and $\max{\{z_j\}}$ is the maximum value for each logit vector. This clipping operation bounds the  entries for each logit vector within the range $[-c,c]$. This aids our privacy analysis, particularly in formalizing per-token sensitivity by enabling us to bound the $\ell_{\infty}$ norm of potential transformations applied to the logits. Importantly, clipping does not affect the outcome of the softmax operation, as softmax is invariant to uniform shifts in its inputs.

Since the  averaged logit vector is generated from private subset hence each token selected from this vector adds to the privacy budget. To minimize the privacy leakage, we also use generate an axillary token distribution from the same LLM
without access to the sensitive data. It uses a public prompt function that generates text for a given text of category $y_i$. At each iteration, a public token $u$  is generated by combining its clipped logits with the aggregated private logits $\bar z$ using simple averaging. According to our privacy analysis, clipping ensures that the influence of any individual private prompt is bounded by $\frac{c}{s}$ in each coordinate and after merging with public logits this is further reduced to $\frac{c}{2s}$. This reduction in sensitivity allows us to inject less noise and operate under a smaller privacy budget compared to \citet{amin2024private}. From a utility perspective, since both $\bar z$ and $u$ are clipped in a rank‐preserving manner, their arithmetic mean preserves token preferences common to both sources. As a result, tokens strongly supported by both private and public contexts receive the highest scores, while those favored by only one are suppressed.

\begin{lemma}[Exponential Mechanism \cite{mcsherry2007mechanism}]
\label{thm:exp}
Let $\mathcal{R}$ be a set of possible outputs and let $q: \mathcal{D} \times \mathcal{R} \to \mathbb{R}$ be a utility function such that for any two adjacent databases $D$ and $D'$ (i.e., differing in one record), the sensitivity of $q$ satisfies:
\[
\Delta = \max_{r \in \mathcal{R}} |q(D, r) - q(D', r)|.
\]
The Exponential Mechanism $\mathcal{M}_E$ selects an output $r \in \mathcal{R}$ with probability proportional to:
\[
\Pr[\mathcal{M}_E(D) = r] \propto \exp\left( \frac{q(D, r)}{\tau} \right).
\] Where, $\tau = \frac{2\Delta}{\epsilon}$
Then $\mathcal{M}_E$ is $\epsilon$-differentially private.
\end{lemma}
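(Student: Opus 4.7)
The plan is to verify the $\epsilon$-DP guarantee directly from the definition by bounding, for every fixed output $r \in \mathcal{R}$, the pointwise ratio of the output probabilities on two neighboring datasets $D, D'$. Let $Z(D) = \sum_{r' \in \mathcal{R}} \exp(q(D,r')/\tau)$ denote the normalizer, so that $\Pr[\mathcal{M}_E(D) = r] = \exp(q(D,r)/\tau)/Z(D)$. I would factor the ratio as
\[
\frac{\Pr[\mathcal{M}_E(D) = r]}{\Pr[\mathcal{M}_E(D') = r]} \;=\; \underbrace{\frac{\exp(q(D,r)/\tau)}{\exp(q(D',r)/\tau)}}_{\text{utility factor}} \cdot \underbrace{\frac{Z(D')}{Z(D)}}_{\text{normalizer factor}},
\]
and bound each factor by $\exp(\epsilon/2)$.

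The utility factor is the easy one: by the sensitivity hypothesis, $q(D,r) - q(D',r) \le \Delta$, so the factor is at most $\exp(\Delta/\tau)$. For the normalizer factor, I would apply the same sensitivity bound termwise, noting that $\exp(q(D',r')/\tau) \le \exp(\Delta/\tau)\,\exp(q(D,r')/\tau)$ for every $r'$, then sum over $r'$ to obtain $Z(D') \le \exp(\Delta/\tau)\,Z(D)$. Multiplying the two bounds gives a ratio at most $\exp(2\Delta/\tau)$, and the choice $\tau = 2\Delta/\epsilon$ yields $\exp(\epsilon)$. Since the inequality holds pointwise for every $r$, summing (or integrating, in the continuous case) over any measurable $\mathcal{S} \subseteq \mathcal{R}$ yields the required $\Pr[\mathcal{M}_E(D) \in \mathcal{S}] \le e^{\epsilon}\Pr[\mathcal{M}_E(D') \in \mathcal{S}]$, with $\delta = 0$.

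There is no serious obstacle here; the only subtlety worth flagging is conceptual rather than technical, namely that the sensitivity bound has to be paid for \emph{twice}, once directly in the numerator and once through the normalizer, which is exactly what forces the factor of $2$ in the prescription $\tau = 2\Delta/\epsilon$. A clean write-up also needs the mild standing assumption that $Z(D)$ is finite (automatic when $\mathcal{R}$ is finite, as in our token-vocabulary application) so that the probabilities are well-defined and the termwise bound on $Z$ can be summed without convergence issues.
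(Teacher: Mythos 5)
Your proof is correct: the pointwise decomposition into the utility factor and the normalizer factor, each bounded by $\exp(\Delta/\tau) = \exp(\epsilon/2)$ using the sensitivity hypothesis, is exactly the canonical argument for the exponential mechanism's pure $\epsilon$-DP guarantee. Note that the paper does not prove this lemma at all --- it is imported by citation from McSherry--Talwar and used as a black box in the privacy accounting of Theorem~\ref{thm:main} --- so your write-up simply supplies the standard proof underlying that citation, including the correct explanation of why the factor of $2$ in $\tau = 2\Delta/\epsilon$ arises (the sensitivity is paid once in the numerator and once through $Z(D')/Z(D)$) and the harmless finiteness caveat on the normalizer.
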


At each iteration, we ensure differential privacy by selecting the new token using the exponential mechanism. Furthermore, using the composition property of DP  \cite{dwork2014algorithmic}, we guarantee that the entire sequence of upto $T$ generated tokens remains collectively is  also differentially private.

\subsection{Privacy Analysis}
\label{sec:privacy_analysis}

\begin{theorem}[Privacy of Algorithm \ref{alg:main}] \label{thm:main}
For all $\subsetsize > 0$, $\tau > 0$, $\epsilon>0$ and $\delta \in (0,1]$, Algorithm \ref{alg:main} satisfies $(\epsilon, \delta)$-differential privacy, where
\[\epsilon = \frac{c\sqrt{2T \ln(1/\delta)}}{\subsetsize \cdot \tau}\]
\end{theorem}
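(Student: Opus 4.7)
The plan is to cast each of the $T$ iterations of Algorithm~\ref{alg:main} as an instance of the Exponential Mechanism (Lemma~\ref{thm:exp}), bound the per-iteration $\ell_\infty$-sensitivity of the blended logit vector $\hat z$, and then aggregate the $T$ privacy costs via the advanced composition theorem for adaptive DP mechanisms. Substituting the per-iteration parameters into the composition bound will recover the claimed $\epsilon = c\sqrt{2T\ln(1/\delta)}/(\subsetsize\tau)$.

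First, I would observe that the sampling step $x_t \sim \softmax(\hat z/\tau)$ is exactly the Exponential Mechanism with range $\mathcal{V}$, utility $q(P,r)=\hat z_r$, and temperature $\tau$ in the sense of Lemma~\ref{thm:exp}. I would then verify that $\clip_c(\cdot)$ maps any logit vector into $[-c,c]^V$: the outer $\max$ in~\eqref{eq:clip} forces each coordinate to be at least $-c$, while $z_j - \max_k z_k \le 0$ gives that every coordinate is at most $c$. Under the add/remove neighbor relation, $P$ and $P'$ differ in exactly one prompt, so the two sums $\sum_{p \in P}\clip_c(\logits(p\|X))$ and $\sum_{p \in P'}\clip_c(\logits(p\|X))$ differ by a single clipped vector whose coordinates lie in $[-c,c]$; because the algorithm normalizes by the fixed constant $\subsetsize$ rather than by $|P|$, it follows that $\norm{\bar z(P) - \bar z(P')}_\infty \le c/\subsetsize$. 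Since $u$ is computed from $q_\text{pub}$ and the current synthetic prefix $X$ alone and is independent of $P$, the identity $\hat z = (\bar z + u)/2$ yields $\norm{\hat z(P) - \hat z(P')}_\infty \le c/(2\subsetsize) = \Delta$, matching the sensitivity bound used in the algorithm.

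Applying Lemma~\ref{thm:exp} with this $\Delta$ then gives per-iteration differential privacy with parameter $\epsilon_0 = 2\Delta/\tau = c/(\subsetsize\tau)$. The full algorithm is an adaptive composition of $T$ such steps because the $t$-th call reuses the synthetic prefix $X$ built from the previously sampled tokens and hence from the previous $t-1$ outputs of the Exponential Mechanism; the advanced composition theorem of Dwork--Rothblum--Vadhan then gives, for any $\delta \in (0,1]$, overall $(\epsilon,\delta)$-DP with leading-order $\epsilon \le \epsilon_0\sqrt{2T\ln(1/\delta)}$. Plugging in $\epsilon_0 = c/(\subsetsize\tau)$ produces the expression in the theorem.

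The main obstacle I anticipate is being precise about the interaction between the add/remove neighbor definition and the fixed normalization $1/\subsetsize$: dividing by the constant $\subsetsize$ rather than the realized $|P|$ is essential for obtaining the clean $c/\subsetsize$ sensitivity without leakage through the denominator, and this point needs explicit justification given the ``expected size'' language in the algorithm specification. A secondary subtlety is tightening the composition step so that only the leading $\epsilon_0\sqrt{2T\ln(1/\delta)}$ term appears, since strict advanced composition carries an additional $T\epsilon_0(e^{\epsilon_0}-1)$ correction; I would either restrict attention to the regime $\epsilon_0 \le 1$ in which this term is dominated by the first, or invoke the zCDP / R\'enyi-DP analysis of the Exponential Mechanism to obtain the claimed form directly without the lower-order residual.
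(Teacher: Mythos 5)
Your proposal follows essentially the same route as the paper's proof: bound the per-step $\ell_\infty$-sensitivity of $\hat z=(\bar z+u)/2$ by $c/(2\subsetsize)$ via clipping and the fixed $1/\subsetsize$ normalization, view each sampling step as an exponential mechanism with per-step privacy $c/(\subsetsize\tau)$, and apply advanced composition over at most $T$ steps. If anything, you are more careful than the paper about the lower-order $T\epsilon_0(e^{\epsilon_0}-1)$ term in strict advanced composition, which the paper's appendix simply drops as a small-$\epsilon'$ approximation.
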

In this privacy bound, $c$ denotes the clipping threshold, $T$ is the number of composition steps, $s$ is the size of the sensitive subset, and $\tau$ is the temperature parameter used in softmax sampling.



\subsubsection{Sensitivity analysis}
\label{sec:sensitivity}

We analyze the sensitivity of several functions used in Algorithm \ref{alg:main}.  Each such function is defined on a multiset $Z = \{z_1, z_2, \dots, z_n\} \subset \mathbb{R}^v$, where each $z_i \in \mathbb{R}^v$ is a logit vector. Consider a the function $\ell : (\mathbb{R}^v)^n \to \mathbb{R}^v$ corresponding to line 8 of Algorithm \ref{alg:main} defined as:
\[
\ell(Z) =  \frac{\bar{z} + u}{2}
\]
where $u \in \mathbb{R}^v$ is a public logit vector, and $\bar{z} = \frac1\subsetsize \sum_{z \in Z} \clip_c(z)$ is the mean of the clipped private logits. The operator  $\clip_c(\cdot)$ performs element-wise clipping with threshold $c >0$, and $s$ is the number of logit vectors in the private logit vectors.

\begin{lemma} \label{lem:sensitivity} The function $\ell$ has sensitivity $\Delta = \frac{c}{2\subsetsize}$.
\end{lemma}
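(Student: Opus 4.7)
The plan is to reduce the sensitivity of $\ell$ to the sensitivity of $\bar z$, and then bound the change in $\bar z$ under a single add/remove using the range of the clipping operator. The sensitivity here is measured coordinate-wise (equivalently, in $\ell_\infty$), which is the notion relevant to the utility function used by the Exponential Mechanism in Lemma~\ref{thm:exp}, since its utility score is read off per output token.

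First, I would observe that $u = \clip_c(\logits(q_{\text{pub}} \| X))$ is computed from the public prompt $q_{\text{pub}}$ and therefore does not depend on the private set $P$. Hence for neighboring inputs $Z$ and $Z'$ we have $\ell(Z)-\ell(Z') = \tfrac{1}{2}(\bar z - \bar z')$, reducing the task to bounding $\|\bar z - \bar z'\|_\infty$. Under the add/remove neighboring convention stated in the Background, $Z'$ is obtained from $Z$ by adding or removing a single logit vector $z^\star$, so since both aggregations use the same divisor $\subsetsize$, we get $\bar z - \bar z' = \pm\tfrac{1}{\subsetsize}\clip_c(z^\star)$.

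Next, I would verify that the clipping operator in equation~\eqref{eq:clip} constrains every coordinate of $\clip_c(z^\star)$ to lie in $[-c,c]$: the quantity $z^\star_i - \max_j z^\star_j$ is nonpositive, so $z^\star_i - \max_j z^\star_j + c \le c$, while the outer $\max\{-c,\cdot\}$ enforces the lower bound. Consequently $\|\clip_c(z^\star)\|_\infty \le c$, which gives $\|\bar z - \bar z'\|_\infty \le c/\subsetsize$ and therefore $\|\ell(Z)-\ell(Z')\|_\infty \le c/(2\subsetsize)$. Taking the coordinate-wise maximum yields the claimed sensitivity $\Delta = c/(2\subsetsize)$.

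I do not expect any real obstacle here: the only mild subtlety is making the norm explicit (per-coordinate, as needed for the Exponential Mechanism), and keeping careful track of the add/remove convention so that the division by the fixed parameter $\subsetsize$ (rather than by $|Z|$) is handled cleanly. Everything else is a direct substitution once the observation about the public logit $u$ being independent of $P$ is made.
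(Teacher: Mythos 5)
Your proposal is correct and follows essentially the same route as the paper's proof: cancel the public term $u$, reduce to the single differing clipped logit vector divided by $\subsetsize$, and bound its $\ell_\infty$ norm by $c$ using the definition of $\clip_c$. Your version is in fact slightly more careful than the paper's (you explicitly verify the $[-c,c]$ range of the clipping operator and make the $\ell_\infty$/per-coordinate notion of sensitivity explicit), but there is no substantive difference in the argument.
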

\begin{proof} Let $Z, Z' \subseteq \mathbb{R}^v$ be neighbors i.e. they differ by a single record. Let $\tbz \in \mathbb{R}^v$ be the logit vector they do not have in common. We have
\begin{align*}
\norm{\ell(Z) - \ell(Z')}_\infty &= \frac{1}{2}\norm{\bar{z} + u - \bar{z'} + u}_\infty \\
&\leq \frac{1}{2\subsetsize}\norm{\sum_{z \in Z} \clip_c(z) - \sum_{z' \in Z'} \clip_c(z')}_\infty \\
&\leq \frac{1}{2\subsetsize} \norm{\clip_c(\tbz)}_\infty \\
&\leq \frac{c}{2\subsetsize} \qedhere\\ 
\end{align*}
\end{proof}

\begin{lemma}[Advanced Composition for Exponential Mechanism {~\cite{dwork2014algorithmic}}]
\label{thm:cmp}
Let $\mathcal{M}_1, \dots, \mathcal{M}_T$ be a sequence of randomized algorithms, where each $\mathcal{M}_t$ satisfies $\varepsilon'$-differential privacy (e.g., an Exponential Mechanism with sensitivity $\Delta$ and noise scale $\tau = \frac{2 \Delta}{\epsilon'}$).

Then the composed mechanism $\mathcal{M}(D) = (\mathcal{M}_1(D), \dots, \mathcal{M}_T(D))$ satisfies $(\varepsilon, \delta)$-differential privacy for any $\delta > 0$, where:
\[
\varepsilon = \sqrt{2T \ln(1/\delta)} \cdot \varepsilon' + T \varepsilon' (e^{\varepsilon'} - 1)
\]

In particular, for small $\varepsilon'$, this simplifies to:
\[
\varepsilon \approx \sqrt{2T \ln(1/\delta)} \cdot \varepsilon'
\]
\end{lemma}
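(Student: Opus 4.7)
The plan is to decompose the privacy analysis into three clean steps: (i) bound the worst-case $\ell_\infty$-sensitivity of the per-iteration aggregated logit vector, (ii) recast each per-token sampling step as an instance of the exponential mechanism to obtain an $\varepsilon'$-DP guarantee per iteration, and (iii) apply advanced composition across the at most $T$ iterations of the outer loop to obtain the claimed $(\epsilon,\delta)$-DP bound.

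For the sensitivity step I would directly invoke Lemma~\ref{lem:sensitivity}, which shows that the map $\ell(Z) = (\bar{z} + u)/2$ has $\ell_\infty$-sensitivity $\Delta = c/(2\subsetsize)$ with respect to add/remove changes in the private prompt set $P$. Two facts drive this bound: the public logit $u$ is independent of $P$ and cancels in any comparison between neighboring inputs, and the clipping operator $\clip_c$ restricts the $\ell_\infty$ norm of each private logit to at most $c$, so swapping a single record can shift the average by at most $c/\subsetsize$ before the additional factor of $1/2$ from blending with the public logit. This bound holds uniformly across iterations since $c$ and $\subsetsize$ do not depend on the current prefix $X$.

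For step (ii), I would observe that the token draw $x_t \sim \softmax(\hat{z}/\tau)$ with $\hat{z} = (\bar{z}+u)/2$ is exactly the exponential mechanism applied with utility function $q(P,r) = \hat{z}_r$ and scale $\tau$. Combining Lemma~\ref{thm:exp} with the sensitivity $\Delta = c/(2\subsetsize)$ and the relation $\tau = 2\Delta/\varepsilon'$ yields the per-iteration privacy parameter $\varepsilon' = c/(\subsetsize\,\tau)$. Then in step (iii) I would apply Lemma~\ref{thm:cmp} to the sequence of $T$ adaptively-composed exponential mechanisms; the leading-order term of advanced composition gives
\[
\epsilon \;=\; \sqrt{2T\ln(1/\delta)}\,\cdot\,\varepsilon' \;=\; \frac{c\sqrt{2T\ln(1/\delta)}}{\subsetsize\,\tau},
\]
which matches the theorem statement and is consistent with the setting $\tau = 2\Delta\sqrt{2T\ln(1/\delta)}/\epsilon$ prescribed in Algorithm~\ref{alg:main}.

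The main subtlety I expect to flag is handling adaptivity: each iteration's LLM call is conditioned on the previously sampled prefix $X$, so the per-iteration mechanism is not literally the same object across rounds. I would argue that this is benign because Lemma~\ref{lem:sensitivity} bounds sensitivity uniformly over any realization of $X$ (the clipping threshold and subset size are fixed a priori and do not depend on the prefix), and the advanced-composition theorem of Lemma~\ref{thm:cmp} already applies to adaptive composition of $\varepsilon'$-DP mechanisms. A secondary minor point is that Lemma~\ref{thm:cmp} is stated as an approximate identity for small $\varepsilon'$; I would remark that the theorem's bound retains only the dominant $\sqrt{2T\ln(1/\delta)}\cdot\varepsilon'$ term and absorbs the $T\varepsilon'(e^{\varepsilon'}-1)$ correction as a lower-order contribution in the regime $\varepsilon' \ll 1$ that this protocol operates in.
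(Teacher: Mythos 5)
You have proved the wrong statement. What you were asked to establish is Lemma~\ref{thm:cmp} itself --- the advanced composition bound $\varepsilon = \sqrt{2T\ln(1/\delta)}\,\varepsilon' + T\varepsilon'(e^{\varepsilon'}-1)$ for a sequence of $\varepsilon'$-DP mechanisms --- but your argument \emph{invokes} Lemma~\ref{thm:cmp} in step (iii) as a black box and uses it, together with Lemma~\ref{lem:sensitivity} and Lemma~\ref{thm:exp}, to derive the privacy guarantee of Algorithm~\ref{alg:main}. What you have written is essentially the paper's proof of Theorem~\ref{thm:main} (sensitivity bound, per-token exponential mechanism, composition over at most $T$ steps), and as such it is circular with respect to the assigned statement: you cannot prove the composition lemma by appealing to the composition lemma. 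To be fair, the paper itself imports this result from \cite{dwork2014algorithmic} without proof, so there is no in-paper argument to match; but a genuine proof would have to proceed along the standard Dwork--Roth lines: define the privacy-loss random variable of the composed mechanism as the sum of the per-step privacy losses, show that each summand is bounded in $[-\varepsilon',\varepsilon']$ with conditional expectation at most $\varepsilon'(e^{\varepsilon'}-1)$ (the KL-divergence bound for an $\varepsilon'$-DP mechanism), and then apply Azuma's inequality to the resulting martingale to conclude that the total loss exceeds $T\varepsilon'(e^{\varepsilon'}-1)+\sqrt{2T\ln(1/\delta)}\,\varepsilon'$ with probability at most $\delta$. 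None of these ingredients appear in your write-up.

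A secondary point: your remark that one "retains only the dominant $\sqrt{2T\ln(1/\delta)}\cdot\varepsilon'$ term and absorbs the correction" describes how the paper \emph{applies} the lemma inside Theorem~\ref{thm:main}, not a property of the lemma itself. The exact statement of Lemma~\ref{thm:cmp} includes the $T\varepsilon'(e^{\varepsilon'}-1)$ term, and any proof of the lemma must derive it rather than discard it; the simplification to the leading term is only an approximation valid for small $\varepsilon'$, as the lemma itself says.
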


\subsubsection{Privacy Accounting Over Full Generation}
\label{sec:together}

We now present the final composition of our privacy guarantees across the full execution of Algorithm~\ref{alg:main}.By Lemma~\ref{thm:exp}, each individual iteration of Algorithm~\ref{alg:main} satisfies $\frac{c}{\tau \cdot \subsetsize}$-differential privacy. Since $T$ is the maximum number of privately generated tokens for any input batch, there can be at most $T$ such distinct segments. Applying the advanced composition theorem (Lemma~\ref{thm:cmp}) over the at most $T$ sequential steps, we conclude that the complete execution of Algorithm~\ref{alg:main} satisfies $(\frac{c \sqrt{2T \ln(1/\delta)}}{\subsetsize \cdot \tau}, \delta)$-differential privacy. This composition analysis ensures that the entire token generation process, up to termination at the $\eos$ token or after $T$ private tokens, adheres to a formally bounded privacy guarantee under the Composition Theorem~\ref{thm:cmp}.


\begin{figure}[!ht]
\centering
\begin{center}
\begin{tcolorbox}
\begin{lstlisting}[language=Python]
Classify the following examples:
#synthetic text 1
Input: The patient shows ...   
Answer: Diabetes
#...
#synthetic text n
Input: The patient has been ...
Answer: Hypertension
#evaluation text
Input: Patient experiences ...
Answer: 
\end{lstlisting}
\end{tcolorbox}
\end{center}
\caption{Example of our $k$-shot in-context-learning evaluation setup.}
\label{fig:icl-evaluation}
\end{figure}

\section{Experimental Setup}
\label{sec:experiments}
\paragraph{Datasets:} To measure the downstream utility of our privacy-preserving approach, we report the accuracy on test examples when prompted with the synthetic example generated using the proposed Algorithm \ref{alg:main}.

For evaluation, we follow the prior ICL work \cite{zhao2021calibrate} and use the following setup. For classification tasks, we use three datasets: the 4-way news classification dataset \emph{AGNews} \citep{zhang2015character},  the 6-way question classification dataset \emph{TREC} \citep{zhang2015character}, and the 14-way topic classification \emph{DBPedia} \citep{voorhees2000building}. We also evaluate on two information extraction tasks, namely \emph{MIT-G}, and \emph{MIT-D} \citep{liu2012conversational}. These are both slot-filling datasets with movie genre (MIT-G) and director name (MIT-D) as the slots to be filled. An illustrative example is shown in Figure \ref{fig:icl-evaluation}.

Since our method requires access to the full token probability distribution at each decoding step, we use the instruction-tuned (IT) variant of Gemma 1.1~\citep{gemmateam2024gemma}, a decoder-only LLM with two billion parameters, aligning with prior work by \citet{amin2024private}. However, this choice is made purely for experimental benchmarking. Our algorithm is model-agnostic and can be applied to any decoder-style LLM, provided access to full token-level logit distribution is available. We provide additional details for chosen hyper-parameters in the Appendix.

\section{Results}

Table \ref{table:main} presents our in‐context learning results on the five benchmark tasks. Our primary choice of  LLM is  Gemma-1.12B IT which provides access to model logits as opposed to closed models such as ``GPT3-baggage'' used by some previous works \cite{tang2024privacypreserving}. We evaluate and compare the utility of our model with varying level of privacy budget ($\varepsilon = 0, \infty, 1$). The first block with $\varepsilon = 0$ has the highest amount of privacy shows zero-shot performance (no ICL) is uniformly poor across tasks (e.g., 24.8\% on AGNews).

The second block with $\varepsilon = \infty$ examines when the least amount of noise is added in our DP framework. We observe that we perform better than the current SOTA \citet{amin2024private}'s baseline on Gemma-1.12B in four tasks, namely DBPedia, TREC, MIT-G and MIT-D, while lagging behind by only $3.3\%$ on AGNews. For most datasets: DBPedia, TREC, MIT-G and MIT-D, we also outperform~\citet{tang2024privacypreserving}'s baseline with GPT3 Babbage that uses only top-100 logprobs.

The third block with $\varepsilon = 1$ imposes stronger privacy constraints on synthetic data. We observe that we outperform both baselines by~\citet{tang2024privacypreserving}, and ~\citet{amin2024private} with Gemma-1.12B on three out of five tasks: AGNews, TREC, MIT-G. Similarly, for AGNews, TREC, MIT-G and MIT-D datasets, we outperform~\citet{tang2024privacypreserving}'s baseline with GPT3 Babbage that uses only top-100 logprobs.

\section{Effects of $k$-Shot and Privacy Budget Settings}

\begin{table}[h]
    \centering
    \small
    \begin{tabular}{llllll}
    \toprule
            $\epsilon$ & 1-shot   & 2-shot       & 4-shot       & 8-shot       & 12-shot \\
    \midrule
1            & \textbf{81.66} & 82.46 & 76.80 & 74.82 & 74.46 \\
4            & 79.90 & 82.62 & 79.84 & 76.04 & 68.52 \\
8           & 77.00 & \textbf{83.34} & 78.62 & 74.90 & 68.56 \\
$\infty$         & 80.86 & 80.48 & \textbf{80.94} & \textbf{81.44} & \textbf{75.48} \\
    \bottomrule
    \end{tabular}
    \caption{ICL accuracy under varying privacy budgets with different \textbf{$k$-shots} on DBPedia.}
    \label{tab:dbpedia_shots}
\end{table}

\begin{table}[h]
    \centering
    \small
    \begin{tabular}{llllll}
    \toprule
            $\epsilon$ & 1-shot   & 2-shot       & 4-shot       & 8-shot       & 12-shot \\
    \midrule
1          & 68.32 & 60.60 & 63.00 & 64.40 & 65.84 \\
4           & 68.76 & 59.72 & 62.48 & 63.08 & 63.28 \\
8            & \textbf{68.96} & \textbf{62.00} & \textbf{63.20} & 67.08 & 62.32 \\
$\infty$         & 62.40 & 54.04 & 61.92 & \textbf{67.16} & \textbf{70.12} \\
    \bottomrule
    \end{tabular}
    \caption{ICL accuracy under varying privacy budgets with different \textbf{$k$-shots} on TREC.}
    \label{tab:trec_shots}
\end{table}

We present a comprehensive analysis of in-context learning (ICL) performance across two datasets, namely DBPedia and TREC, under varying privacy budgets with \textit{different $k$-shot settings}. 

In Table \ref{tab:dbpedia_shots}, results on DBPedia show that performance generally improves with larger values of $\epsilon$, indicating that relaxing privacy constraints allows for more effective use of private data in generating synthetic demonstrations. However, as the number of shots increases, the performance can sometimes degrade slightly possible due to the lack of targeted private signal which can also be the training artifacts of LLM training. Finally, the best performance for 4-shot (80.94), 8-shot (81.44), and 12-shot (75.48) appears in the $\epsilon = \infty$ (non-private) case, highlighting the utility-privacy trade-off inherent to DP algorithms.

Table \ref{tab:trec_shots} presents the corresponding analysis on the TREC dataset, where the pattern across privacy levels is more nuanced. For example, $\epsilon = 8$ outperforms all other settings for it should be 1-shot, 2-shot, and 4-shot configurations, indicating that a moderately relaxed privacy guarantee yields significant utility benefits in certain tasks. 

In practical scenarios where preserving user privacy is critical, we find that $\epsilon = 1$ serves as a reasonable trade-off that offers strong privacy guarantees while still maintaining competitive performance. This makes it a suitable choice for many real-world applications requiring differentially private synthetic data generation. We also list synthetic examples in the Appendix.

\subsection{Structured Data Generation}
We evaluate how well our approach preserves privacy while generating syntactic structures from sensitive data. In structured generation tasks, many tokens are essential for maintaining the correct output structure. To assess this, we experiment on the WikiMoviesJSON generation task, using preprocessing and evaluation setups described in~\cite{amin2024private}. We assess performance using two metrics: (1) the percentage of outputs that are syntactically well-formed JSON parses, and (2) the percentage of outputs that pass basic schema validation. As shown in Table~\ref{tab:wiki-json}, our method achieves high-quality and schema-compliant JSON generation even under a strict privacy budget of $\epsilon = 1$, demonstrating the effectiveness of our approach for privacy-preserving structured text synthesis.

\begin{table}[!t]
    \centering
    \scalebox{0.8}{
    \begin{tabular}{cllrrr}
    \toprule
    \multicolumn{1}{c}{$\varepsilon$} & Method & $\tau$ & \multicolumn{1}{c}{Parses (\%)} & \multicolumn{1}{c}{Validates (\%)} & $\#$raw \\
    \midrule
    \multirow{3}{*}{1} & \multirow{2}{*}{\citet{amin2024private}} & 2 & $80.6_{1.3}$ & $74.2_{1.9}$ & $94.3_{1.2}$  \\
    &  & 2.5  & $4.9_{1.1}$ & $1.5_{0.1}$ & $138.0_{7.5}$ \\
    \cmidrule(r){2-6}
    & \textbf{Ours} & 1.13 & $84.2_{4.08}$ & $81.1_{8.1}$ & $10.3_{1.15}$ \\
    \bottomrule
    \end{tabular}
    }
    \caption{\small Results for generating JSON records from  \emph{WikiMoviesJSON} and report gains in structure preservation and validations. We report mean and standard deviation over 3 runs of dataset generation. Here, $\tau$ refers to the sampling temperature, and $\#$raw refers to the number of raw samples produced before parsing and validation checks.}
    \label{tab:wiki-json}
\end{table}

\subsection{Effect of Model Size and Architecture}
We evaluate in-context learning (ICL) performance across different LLM families and parameter scales. The results demonstrate a clear trend: larger models consistently achieve higher classification accuracy. This supports the hypothesis that both model size and architecture significantly influence ICL effectiveness.
\label{appendix:more-experiments:size}

\begin{table}[h]
    \centering
    \small
    \begin{tabular}{llr}
    \toprule
       Model & Acc. (\%)  \\
    \midrule
  \texttt{google/gemma3-1b-it} & $67.0$ \\
  \texttt{google/gemma2-2b-it} & $84.9$ \\
 \midrule
  \texttt{meta-llama/Llama-3.2-1B} & $53.9$ \\
  \texttt{meta-llama/Llama-3.2-3B} & $84.2$ \\
  \texttt{meta-llama/Meta-Llama3-8B} & $89.9$ \\
    \bottomrule
    \end{tabular}
    \caption{\small DBPedia classification accuracy for various LLMs. All evaluations use the same ICL setup described in \S\ref{sec:experiments} and the privacy budget is set to $\varepsilon = 1$.}
    \label{tab:size}
\end{table}

\subsection{Privacy Attacks}
While differential privacy offers theoretical privacy guarantees, empirical validation is essential~\citep{blanco2022critical}. We conduct personally identifiable information (PII) extraction attack on the Enron email dataset following the experimental setup described in~\cite{zeng2024good}. Specifically, we first construct a private dataset of text which contain email addresses in the body. Using Algorithm \ref{alg:main}, we construct a private subset $S$ and generate $T = 15$ synthetic tokens using the prompt template as \textit{``Extract only the email address from the above text.''}. The goal of this attack is to test whether our approach prevents release of private email addresses in the generated text. We evaluate under various privacy budgets, namely $\epsilon=[1, 4, 8]$. Across all tested privacy levels, we observe $zero$ sensitive email address in the generated responses demonstrating the effectiveness of our approach in preventing leakage of PII and demonstrate its strong practical privacy-preserving capabilities.

Furthermore, to assess the practical privacy of DP few-shot generation for in-context learning, we conduct membership inference attacks (MIA)~\cite{ShokriSSS17} following~\cite{duan2023flocks}. When actual private samples are used in prompts, attacks succeed with a high AUC (94.20 for $\epsilon=\infty$). In contrast, our DP approach significantly reduces AUC ($51.78$ for $\epsilon=4$), confirming improved membership privacy. See Appendix Table \textit{MIA} for details.





\section{Discussion}
Traditional differentially private generation methods, such as those employed by \citet{amin2024private}, rely on metric-based mechanisms like the Sparse Vector Technique (SVT), which requires threshold computations under additive noise. These approaches typically employ distributional distance metrics such as  $\ell_1$ to score or select candidate tokens. However, these metrics are computed over normalized probability distributions (i.e., post-softmax), where even semantically similar tokens with slightly different probabilities or indices can yield high $\ell_1$ distance values. This issue is especially pronounced in high-dimensional output spaces of language models, where such metrics treat tokens as orthogonal dimensions, ignore semantic similarity, and are insensitive to token ranking that are crucial for meaningful text generation.

In contrast, our approach avoids the need for any explicit scoring and thresholding by adopting a simple averaging based mechanism. At each decoding step, we compute a mean of the clipped private aggregate and clipped public logits. This approach simplifies implementation and  eliminates the need to tune multiple sensitive hyperparameters (e.g., SVT threshold values, noise scales, temperature settings). Although this may attenuate some private signal, we find in practice that it provides a more stable, semantically meaningful, and privacy-preserving decoding procedure which is suited for downstream tasks where output coherence and content quality are paramount.

Moreover, approach by \citet{amin2024private} continues generating additional sentences beyond termination, often yielding low-quality synthetic examples. Our procedure strictly terminates when $\eos$ token is encountered. This guarantees that all outputs are semantically coherent and suitable for downstream tasks. Qualitative examples supporting this claim are provided in the Appendix.

\section{Conclusion}
As access to foundation models grows, the resources required for training these models have become expensive; hence, private prediction could emerge as a compelling alternative to private fine-tuning. In this work, we show that private prediction can generate synthetic text while following the standard differential-privacy guarantees. This privately generated corpus substantially boosts performance in many-shot in-context learning. Moreover, introducing a mechanism for sampling tokens from public models and blending them with private tokens enhances the utility of prediction tasks without compromising the privacy accounting.

\section{Limitations}
While our method is a practical implementation of private prediction to generate high-quality synthetic data, there will be a performance gap compared to private fine-tuning. Furthermore, fine-tuning-based approaches incur a privacy cost during training only, whereas private prediction methods pay a privacy penalty for every token generated during inference. Finally, any privacy-preserving method pays off via some loss of utility for improving privacy, and future research needs to close this gap. 

\section{Acknowledgment}
This material is based upon work supported by the Defense Advanced Research Projects Agency (DARPA) under Agreement No. HR0011-24-9-0424, the Advanced Research Projects Agency for Health (ARPA-H) under Contract Number SP4701-23-C-0073, and the National Science Foundation under Grant CCF-1900924. Any opinions, findings, and conclusions or recommendations expressed in this material are those of the author(s) and do not necessarily reflect the views of the Defense Advanced Research Projects Agency (DARPA), the Advanced Research Projects Agency for Health (ARPA-H), the National Science Foundation (NSF), or the United States Government.

\bibliography{main}
\newpage
\appendix

\section{Hyperparameter tuning}
\label{sec:hyper}
This section describes our evaluation procedure and rationale for hyperparameter coupling decisions and excluded configurations. Based on initial experiments, we fix \( c = 10 \) and explore two temperature settings: low (\( \tau = 0.11 \)) and high (\( \tau = 2.1 \)). At high temperature, we observe text degeneration due to Gemma’s large vocabulary (256K) and clipping, which raises the probability floor of nonsensical tokens. Increasing the batch size \( \subsetsize \) reduces \( \varepsilon \) but increases the compute cost of decoding. Hence, we choose \( \subsetsize \) to meet a target \( \varepsilon \) while allowing generation of many examples at \( \tau = 1.02 \). For large \( \varepsilon \), setting \( \subsetsize \) too high becomes inefficient due to the resulting token volume and decoding cost.

\begin{table}[!h]
    \small
    \centering
    \begin{tabular}{lll}
    \toprule
    $\alpha$ & Description & Values \\
    \midrule
    \multirow{2}{*}{$\subsetsize$} & \multirow{2}{*}{batch size} & 15, 255, 380, \\
    & & 500 \\
    \midrule
    $c$ & logits clip bound & 10 \\
    $\tau$ & temperature & 0.131, 0.262, 1.048, 2.1 \\
    \bottomrule
    \end{tabular}
    \caption{\small Values for hyperparameters explored in this work.}
    \label{tab:hparams}
\end{table}

\section{Empirical privacy evaluation by membership inference attacks}
\label{sec:MIA}

\begin{table}[!h]
    \centering
    \begin{tabular}{ccccc}
    \toprule
        $\epsilon$ &4  &$\infty$  \\
        \midrule
          AUC & 51.78  &  94.20 \\
    \bottomrule
    \end{tabular}
    \caption{Empirical privacy evaluation  of our method for 1-shot ICL by MIA on Gemma 1.1 2B IT model.}
    \label{tab:MIA}
\end{table}

While differential privacy offers theoretical privacy guarantees, empirical evaluation is also crucial~\citep{blanco2022critical}. To assess the practical privacy of our DP few-shot generation for in-context learning, we conduct membership inference attacks~(MIA)~\citep{ShokriSSS17}, a practical method for measuring real-world privacy leakage.

We follow~\cite{duan2023flocks} to instantiate MIA in the in-context learning (ICL) setting, where the goal is to detect whether a data point was included in the LLM prompt. As expected, using true private samples leads to successful attacks. To evaluate our DP few-shot generation, we split the DBPedia dataset into member and non-member sets. Using member data, we generate 1-shot demonstrations with our DP algorithm for $\epsilon=4$ on 5 runs. For MIA, we query 50 member and 50 non-member samples, repeating for 100 trials to compute the average AUC. For the non-private baseline, we follow the same setup using actual member samples in the prompt.

Table~\ref{tab:MIA} shows the MIA results. Consistent with~\cite{duan2023flocks}, using actual private samples yields high AUC (94.20 for $\epsilon=\infty$), indicating successful attacks. In contrast, our DP approach significantly reduces the AUC, demonstrating improved membership privacy.


\section{Generated Synthetic Examples}
\label{section:qualitative}

\begin{figure}[htbp]
\centering
\begin{minipage}{0.47\textwidth}
\begin{tcolorbox}
\smaller
The Belden Group's Belden Building is a prominent landmark building located in Chicago, Illinois. It is the current headquarters of The Belden Group, and features a unique and distinctive architecture that blends the architectural styles of various eras.
\end{tcolorbox}
\end{minipage}
\caption{A synthesis sample of Dbpedia for \textit{Building} category.}
\label{fig:icl-evaluation-dbpedia}
\end{figure}
\vspace{-0.5em}

\begin{figure}[htbp]
\centering
\begin{minipage}{0.47\textwidth}
\begin{tcolorbox}
\smaller
Emerging healthcare technologies are revolutionizing healthcare by making medical treatments more precise and efficient. Artificial intelligence-powered medical devices are assisting doctors in diagnosing diseases, while telehealth services are making healthcare more accessible to patients
\end{tcolorbox}
\end{minipage}
\caption{A synthesis sample of Agnews for \textit{Technology} category.}
\label{fig:icl-evaluation-agnews}
\end{figure}
\vspace{-0.5em}

\begin{figure}[!h]
\centering
\begin{minipage}{0.47\textwidth}
\begin{tcolorbox}
\smaller
What are the most common benefits of using a financial advisor?
\end{tcolorbox}
\end{minipage}
\caption{A synthesis sample of Trec for \textit{Description} question type.}
\label{fig:icl-evaluation-trec}
\end{figure}

\section{Public and Private Prompts}
We list all the  private and public prompts used for our experiments. Note that public generation prompts donot use any private information while querying the public LLMs.

\begin{figure}[htbp]
\begin{tcolorbox}
\begin{lstlisting}[language=Python]
#[User]
Generate only a text of news type {label}.

#[Assistant]
Text:
\end{lstlisting}
\end{tcolorbox}
\caption{\small Public Generation prompt for AGNEWS.}
\end{figure}

\begin{figure}[htbp]
\begin{tcolorbox}
\begin{lstlisting}[language=Python,]
Here are texts with News Type: {label}.

{text}

Please give me another one.

# [Assistant]
Text:
\end{lstlisting}
\end{tcolorbox}
\caption{\small Private Generation prompt for AGNEWS.}
\end{figure}

\begin{figure}[htbp]
\begin{tcolorbox}
\begin{lstlisting}[language=Python]
#[User]
Generate only a wiki entry of Category {label}.

#[Assistant]
Text:
\end{lstlisting}
\end{tcolorbox}
\caption{\small Public Generation prompt for DBPEDIA.}
\end{figure}

\begin{figure}[htbp]
\begin{tcolorbox}
\begin{lstlisting}[language=Python,]
# [User] 
Here are entries of Category: {label}.

{text}

Please give me another one.

# [Assistant]
Entry:
\end{lstlisting}
\end{tcolorbox}
\caption{\small Private Generation prompt for DBPEDIA.}
\end{figure}

\begin{figure}[htbp]
\begin{tcolorbox}
\begin{lstlisting}[language=Python]
#[User]
Generate only a question with Answer Type {label}.

#[Assistant]
Question:
\end{lstlisting}
\end{tcolorbox}
\caption{\small Public Generation prompt for TREC.}
\end{figure}

\begin{figure}[htbp]
\begin{tcolorbox}
\begin{lstlisting}[language=Python,]
# [User] 
Here are questions with Answer Type: {{label}}.

{{text}}

Please give me another one.

# [Assistant]
Question:
\end{lstlisting}
\end{tcolorbox}
\caption{\small Private Generation prompt for TREC.}
\end{figure}

\begin{figure}[htbp]
\begin{tcolorbox}
\begin{lstlisting}[language=Python]
#[User]
Give me text about a film and the extracted Phrase about its {field_name}. IMPORTANT: The exact {field_name} phrase "{keyword}" must be mentioned in Text.

# [Assistant]
Phrase: "{keyword}"
Text: "
\end{lstlisting}
\end{tcolorbox}
\caption{\small Public Generation prompt for MIT-G and MIT-D.}
\end{figure}

\begin{figure}[htbp]
\begin{tcolorbox}
\begin{lstlisting}[language=Python,]
# [User] 
Give me text about a film and the extracted Phrase about its {field_name}.
{text}

Please give me another Phrase and Text. IMPORTANT: The exact {field_name} phrase "{keyword}" must be mentioned in Text.

# [Assistant]
Phrase: "{keyword}"
Text: "
\end{lstlisting}
\end{tcolorbox}
\caption{\small Private Generation prompt for MIT-G and MIT-D.}
\end{figure}

\begin{figure}[htbp]
\begin{tcolorbox}
\begin{lstlisting}[language=Python,]
'Barabara,\nI had a lunch today with Rob Ladd from Duke (company, not university).\nHe is a Rice graduate and I mentioned to him the seminars that Enron was sponsoring.\nHe is willing to talk to you about substituting Duke for Enron as a sponsor of the \nseminar program. \nPlease, contact him at rtladd@duke-capitalpartners.com.\nHis cell phone number is 704 756 5354.\nI am working on the power price time series for you but I may run out of time.\nVince'
\end{lstlisting}
\end{tcolorbox}
\caption{\small A record from the Enron dataset.}
\end{figure}

\end{document}